\documentclass[12pt]{l4dc2021}

\newcommand{\junk}[1] {}
\usepackage[font=small,labelfont=bf]{caption}
\usepackage{booktabs}  
\usepackage{siunitx}



\title[Neural-Newton Solvers]{Accelerating Dynamical System Simulations with\\Contracting and Physics-Projected Neural-Newton Solvers}
\usepackage{times}





\author{\Name{Samuel Chevalier$^1$} \Email{schev@elektro.dtu.dk}\\
 \Name{Jochen Stiasny$^1$} \Email{jbest@elektro.dtu.dk}\\
 \Name{Spyros Chatzivasileiadis$^1$} \Email{spchatz@elektro.dtu.dk}\\
 \addr $^1$ Technical University of Denmark,
  Kgs. Lyngby, Denmark}

\begin{document}

\maketitle

\begin{abstract}%
Recent advances in deep learning have allowed neural networks (NNs) to successfully replace traditional numerical solvers in many applications, thus enabling impressive computing gains. One such application is time domain simulation, which is indispensable for the design, analysis and operation of many engineering systems. Simulating dynamical systems with implicit Newton-based solvers is a computationally heavy task, as it requires the solution of a parameterized system of differential and algebraic equations at each time step. A variety of NN-based methodologies have been shown to successfully approximate the trajectories computed by numerical solvers at a fraction of the time. However, few previous works have used NNs to model the numerical solver itself. For the express purpose of accelerating time domain simulation speeds, this paper proposes and explores two complementary alternatives for modeling numerical solvers. First, we use a NN to mimic the linear transformation provided by the inverse Jacobian in a single Newton step. Using this procedure, we evaluate and project the exact, physics-based residual error onto the NN mapping, thus leaving physics ``in the loop''. The resulting tool, termed the Physics-pRojected Neural-Newton Solver (\texttt{PRoNNS}), is able to achieve an extremely high degree of numerical accuracy at speeds which were observed to be up to 31\% faster than a Newton-based solver. In the second approach, we model the Newton solver at the heart of an implicit Runge-Kutta integrator as a \textit{contracting map} iteratively seeking a fixed point on a time domain trajectory. The associated recurrent NN simulation tool, termed the Contracting Neural-Newton Solver (\texttt{CoNNS}), is embedded with training constraints (via CVXPY Layers) which guarantee the mapping provided by the NN satisfies the Banach fixed-point theorem; successive passes through the NN are therefore guaranteed to converge to a unique, fixed point. Explicitly capturing the contracting nature of Newton iterations leads to significantly increased NN accuracy relative to a vanilla NN. We test and evaluate the merits of both \texttt{PRoNNS} and \texttt{CoNNS} on three dynamical test systems.
\end{abstract}

\begin{keywords}%
  Dynamical Simulation, Recurrent Neural Networks, Runge-Kutta, Contraction
\end{keywords}

\section{Introduction}
Across many applications, Neural Networks (NNs) are being increasingly constructed, and subsequently utilized, as iterative self-mapping functions~\citep{Chen:2018}:
\begin{align}\label{eq: self_map}
    x^{(i+1)}=f(x^{(i)},\theta),
\end{align}
where the NN mapping ${\ f}:{\mathbb R}^n\rightarrow {\mathbb R}^n$ is parameterized by some input $\theta$. Residual Networks (ResNets)~\citep{He:2016}, normalizing flows~\citep{Kobyzev:2020}, and even standard Recurrent Neural Networks (RNNs)~\citep{Yu:2019} all approximately leverage this self-mapping structure\citep{Chen:2018}. The observation that (\ref{eq: self_map}) can be interpreted as a forward Euler step \citep{Lu:2018,Haber:2017} of some underlying continuous dynamical system has lead to the re-emergence~\citep{Massaroli:2020,Cohen:1983} of research which treats NNs as dynamical systems~\citep{Li:2019}. Notable recent works include, for example, Neural ODEs~\citep{Chen:2018} with its many variants~\citep{Massaroli:2020,Tzen:2019,Poli:2019,Zhang:2019}, and Deep Equilibrium Models~\citep{Bai:2019,Bai:2020}. These approaches, collectively referred to as Continuous Deep Learning Models (CDLMs), provide a variety of computational and modeling benefits, but they are also able to directly leverage a century worth of tools from the differential equations community.

The CDLM perspective arises by treating (\ref{eq: self_map}) as a discretized Euler step of some continuous system; in the limit, (\ref{eq: self_map}) transforms into a true derivative (i.e., $\dot{{x}}=f({x},\theta)$), and CDLMs naturally emerge~\citep{Chen:2018}. In this paper, we offer an alternative perspective: rather than having (\ref{eq: self_map}) (or its continuous counterpart) directly model the evolution of some system of interest, we instead use it to model the iterative steps taken by the numerical \textit{solver} of the system. Learning the behaviour of a numerical solver (rather than the behaviour of a system itself) can be advantageous for a number of reasons, including the learned solver's ability to provide arbitrary levels of simulation accuracy. 
In this paper, we focus exclusively on implicit Runge-Kutta (IRK) integration methods which require the solution of a (potentially nonlinear) system of equations at each time step; such IRK methods are typically solved with a Newton-based root-finding tool and are therefore computationally expensive~\citep{Iserles:2008}. NN-based models can be used to help reduce this computational effort. Modeling Newton's method with a NN has two salient alternatives; one can use a NN $\Phi$ to either (i) predict the linear transformation provided by the inverted Jacobian inside of a Newton loop, or (ii) directly predict iterative state updates via $x^{(i+1)}\leftarrow\Phi(x^{(i)})$. In this paper, we outline novel frameworks for both of these approaches, and we compare and contrast their merits.

\textbf{Predicting linear transformations:} We first use a NN to predict the expensive linear transformation (i.e., Jacobian inverse $J^{-1}$) mappings which are at the computational heart of a Newton routine. In doing so, we exploit the fact that Newton steps can include some degree of error and still iteratively arrive at the desired root. In this paper, we use a NN to learn these linear transformations, and then we iteratively project the residual error of future time domain trajectory points onto these linear transformations. We thus refer to the resulting tool as the Physics-pRojected Neural-Newton Solver (\texttt{PRoNNS}), since it projects physics-based residuals directly into the NN at each iteration.

\textbf{Directly predicting state updates:} We also explore removing the physics-based residual function altogether to learn a fully data-driven iterative solver. By directly modeling a numerical solver, we can equip (\ref{eq: self_map}) with a particular mathematical property that few dynamical systems actually possess: contraction. A self-mapping system contracts if a distinct set of inputs are farther apart than the outputs to which they map~\citep{Pata:2019}. Thus, we use (\ref{eq: self_map}) to model a contracting numerical solver. If certain conditions are satisfied (e.g., conditions for the Newton-Kantorovich Theorem~\citep{Ortega:2000}), then Newton's method is guaranteed to contract around the fixed point it is seeking. By ensuring that the Newton solver always iterates in a locally contracting region, we can train a NN which both mimics Newton's iterative contracting behavior and, furthermore, is guaranteed to converge. The resulting NN is termed the Contracting Neural-Newton Solver (\texttt{CoNNS}).



\textbf{Paper Contributions:} In this paper, we design two approaches for modeling IRK numerical solvers. Through \texttt{PRoNNS}, (i) we develop a model which learns the linear transformations required at each Newton solver step, and (ii) we overcome the problem of training on residuals whose norms decay to 0 by implementing a normalization procedure which preserves the mappings of the underlying linear transformations. Through \texttt{CoNNS}, (iii) we derive conditions under which a NN of the form (\ref{eq: self_map}) is guaranteed to contract and thus satisfy the Banach fixed point theorem, and (iv) we pose these contraction conditions as simple, semidefinite programming constraints which can be implemented with open source tools (CVXPY Layers~\citep{Agrawal:2019}) inside of a NN training routine. Finally, (v) we provide a comparison between these alternative numerical solver modeling approaches and showcase their capabilities on several dynamical test systems.


\section{Related work}
The direct modeling of dynamical systems using NNs has a rich history~\citep{Garcia:1998,Milano:2002,Kosmatopoulos:1995,Tiumentsev:2019}; subsequent simulation of the resulting parameterized models, though, requires the use of classical numerical solvers. Recently, attention has shifted towards constructing models which are capable of \textit{directly} predicting the time domain response of a dynamical system, thus bypassing the need for a numerical solver. In particular, the so-called Physics Informed Neural Network (PINN)~\citep{Raissi:2019}, which uses physics-based sensitivities to regularize the training loss function, has achieved great success in extrapolating the solutions of Partial Differential Equations (PDEs), both in continuous and discrete time applications. A variety of follow up works~\citep{Pang:2019,Yeonjong:2020,Lu:2021,Misyris:2020} have quickly improved upon the method, adding additional training constraints and extending it into new domains. In particular,~\citep{Roehrl:2020} formally extends PINN modeling to ODEs; ODEs are the primary focus of this paper.

Rather than directly parameterizing a NN with an input time variable $t$, which is often necessary with PINNs, an alternative approximation scheme uses a \textit{flow map} interpretation of dynamical systems~\citep{Ying:2006}; in this case, a NN-based flow map function learns to directly advance a dynamical system state from ${ x}(t)$ to ${x}(t+\Delta t)$. This is accomplished with a feedforward NN in~\citep{Pan:2018} and a ResNet in~\citep{Qin:2019}. The standard ResNet approach is improved upon in~\citep{Yuying:2020}, where a hierarchy of ResNets is used in order overcomes the problem of numerical stiffness in dynamical systems. NNs which mimic flow maps, however, must implicitly model both the dynamics of a system and the application of a numerical integration tool~\citep{Luchtenburg:2014} (e.g., implicit Runge-Kutta) all at once, and there is no way to control the accuracy of the prediction they provide (other than increased training).

Ultimately, both PINNs and flow maps directly learn the trajectories associated with dynamical systems. Such trajectories, however, are usually computed with implicit numerical solvers. An even more fundamental approach to learning dynamical trajectories is to therefore directly learn the \textit{iterative steps} taken by the associated numerical solver. While this approach has not been applied to the problem of time domain simulation, to the authors' knowledge, it has been applied in limited cases to other engineering problems~\citep{Baker:2020,Mathia:1995}. However, system physics has never been incorporated inside the iterative solver loop (as we do with \texttt{PRoNNS}), and the stability of the NN-based solver has not been explicitly addressed (as we consider with \texttt{CoNNS}).

The convergence of our second solver, \texttt{CoNNS}, coincides with the convergence of its RNN model. There is extensive work which analyzes the stability of RNNs, beginning with~\citep{Simard:1988}. Rigorous energy function formulations are offered in~\citep{Cao:2005,Mostafa:2011,Hu:2002,Yi:2013}. When deriving \texttt{CoNNS}, we target contraction, which is a stronger notion than stability, but has sparsely been addressed in the discrete RNN literature until more recently. \citep{Steck:1992} was one of the first papers which derived sufficient conditions for the contraction of a single layer RNN with bounded activation functions. Subsequently,~\citep{Mandic:2000} developed analytical conditions related to the slope of the sigmoid activation function and the size of the weighting parameters. Most recently,~\citep{Miller:2018} investigated the stability of RNNs from the perspective of so-called $\lambda$-contractive sets. These methods are improved upon and analyzed in~\citep{Revay:2020}, where projected gradient descent is used to project the NN model into a contracting space. In both~\citep{Miller:2018} and~\citep{Revay:2020}, however, contraction of the RNN is utilized only as a conservative proxy to achieve stability; in neither case is convergence to a unique fixed point an explicit goal for the underlying RNN.

\section{Mathematical background}

We denote $\varsigma_m(\cdot)$ as the maximum singular value operator, i.e., $\varsigma_{m}({A})={\rm max}\{\sqrt{\lambda({A}{A}^{T})}\}$ \citep{Horn_1990}. Notably, the largest singular value provides the bound $\Vert{A}{x}\Vert_2 \le\varsigma_{m}({A})\Vert {x}\Vert_2$.

\subsection{Implicit Runge-Kutta Integration}
We consider a time-invariant system of nonlinear ODEs given as $\dot{{x}}(t)\triangleq\tfrac{d}{dt}{{x}}(t)={f}({x}(t))$.
The goal of time domain simulation~\citep{Yuying:2020} is to integrate ${x}(t)$ forward in time via
\begin{align}\label{eq: TDI}
{x}(t+\Delta t)={x}(t)+\int_{t}^{t+\Delta t}{f}({x}(\tau))d\tau.
\end{align}
Since the closed form solution of (\ref{eq: TDI}) is rarely available, $x(t)$ is typically integrated using a Runge-Kutta time-stepping approach. Depending on how the parameters of the integration scheme are chosen, Runge-Kutta methods can take the form of many popular integrators (e.g., backward Euler); these methods can be classified as either explicit or implicit. In this work, we exclusively consider implicit integration schemes, since they are commonly used in many physics-based applications; they are, however, computationally expensive to solve. In the Implicit Runge-Kutta (IRK) method~\citep{Iserles:2008}, the future state ${x}(t+\Delta t)$ is written as the sum
\begin{align}\label{eq: RK1}
{x}(t+\Delta t)={x}(t)+\Delta t \sum_{i=1}^{s}b_{i}{k}_{i},
\end{align}
where $s$ is the number of ``stages'' associated with the IRK method, and ${k}_{i}$ is a vector of trajectory derivatives at various points $t+c_i \Delta t$ between $t$ and $t + \Delta t$. These derivative terms are computed as
\begin{align}\label{eq: RK2}
{k}_{i}={f}\bigg({x}(t)+\Delta t\sum_{j=1}^{s}\alpha_{i,j}{k}_{j}\bigg), \quad i\in \{1,\ldots,s\}.
\end{align}
The parameters $\alpha_{i,j}$, $b_i$, and $c_i$ are given coefficients from the Butcher tableau. Notably, the Runge-Kutta step given by (\ref{eq: RK1})-(\ref{eq: RK2}) represents an implicit, nonlinear system of equations.\junk{This can be seen by writing the system of equations from (\ref{eq: RK2}) via
\begin{align}\label{eq:K_RK}
{r}({k}_{1},{k}_{2},....,{k}_{s})=\left\{ \begin{array}{c}
{ 0}={k}_{1}-{f}({x}(t)+\Delta t(\alpha_{1,1}{k}_{1}+\alpha_{1,2}{k}_{2}+\cdots\alpha_{1,s}{k}_{s}))\\
{0}={k}_{2}-{f}({x}(t)+\Delta t(\alpha_{2,1}{k}_{1}+\alpha_{2,2}{k}_{2}+\cdots\alpha_{2,s}{k}_{s}))\\[-2pt]
\vdots\\
{0}={k}_{s}-{f}({x}(t)+\Delta t(\alpha_{s,1}{k}_{1}+\alpha_{s,2}{k}_{2}+\cdots\alpha_{s,s}{k}_{s})).
\end{array}\right.
\end{align}With TI, the system (\ref{eq:K_RK}) collapses down to}
Typically, an iterative root finding tool, such as Newton's method, is used to drive this system of nonlinear equations to $0$ at each time step. Without loss of generality, this paper will focus exclusively on the trapezoidal integration method~\citep{Iserles:2008} as a guiding example, since trapezoidal integration is the primary workhorse behind many ODE solvers~\citep{Milano:2010}. With trapezoidal integration, the implicit system of equations associated with (\ref{eq: RK2}) is given by
\begin{align}\label{eq: Trap}
{0}	={k}_{2}-{f}({x}(t)+\tfrac{\Delta t}{2}{k}_{1}+\tfrac{\Delta t}{2}{k}_{2}))\triangleq r({ k}_2),
\end{align}
where ${k}_{1}={f}({x}(t))$. System (\ref{eq: Trap}) is typically solved by linearizing $r({k}_2)\approx r({k}_2^{(0)})+{J}({k}_2^{(0)})\Delta{k}_2$, setting the expression equal to $0$, and then defining the iterative self-map
\begin{align}\label{eq: NewtRK}
{k}^{(i+1)}_2={k}_2^{(i)}-{J}({k}_2^{(i)})^{-1}r({k}_2^{(i)})\triangleq{g}({k}_2^{(i)}).
\end{align}
The primary computational bottleneck associated with IRK integration comes from solving nonlinear system (\ref{eq: Trap}) by iterating on (\ref{eq: NewtRK}). For convenience, ${k}\triangleq{k}_2$ in the remainder of this paper.

\subsection{Fixed points of contracting systems}
\begin{definition}[Contraction Mapping~\citep{Pata:2019}]\label{Def: cont} A function ${f}:\mathbb{R}^{n}\rightarrow\mathbb{R}^{n}$ is said to be contracting, or a contraction mapping, if, for any ${x},{y}\in \mathbb{R}^{n}$, there exists $0\le \mu < 1$ such that
\begin{align}\label{eq: cont_cond}
\left\Vert {f}({x})-{f}({y})\right\Vert_2 \le\mu\left\Vert {x}-{y}\right\Vert_2.
\end{align}
\end{definition}


\begin{theorem}[Banach Fixed-Point Theorem~\citep{Pata:2019}]
Let ${f}:\mathbb{R}^{n}\rightarrow\mathbb{R}^{n}$ be contracting on the complete metric space of $\mathbb{R}^{n}$. Then, ${f}$ has a unique fixed point ${x}^*$, such that ${x}^*={f}({x}^*)$. Moreover, for any ${x}\in \mathbb{R}^n$, the sequence ${f}\circ\cdots{f}\circ{f}({x})$ converges to ${x}^*$.
\end{theorem}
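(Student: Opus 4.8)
The plan is to run the classical Picard iteration argument. First I would fix an arbitrary starting point $\bm{x}_0\in\mathbb{R}^n$ and define the iterates $\bm{x}_{k+1}=\bm{f}(\bm{x}_k)$, which is precisely the sequence $\bm{f}\circ\cdots\circ\bm{f}(\bm{x}_0)$ appearing in the statement. Applying the contraction inequality of Definition~\ref{Def: cont} to consecutive iterates gives $\Vert\bm{x}_{k+1}-\bm{x}_k\Vert_2\le\mu\Vert\bm{x}_k-\bm{x}_{k-1}\Vert_2$, and a trivial induction then yields the geometric decay estimate $\Vert\bm{x}_{k+1}-\bm{x}_k\Vert_2\le\mu^k\Vert\bm{x}_1-\bm{x}_0\Vert_2$.

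Next I would show that $\{\bm{x}_k\}$ is Cauchy. For $m>k$, the triangle inequality together with the decay estimate gives $\Vert\bm{x}_m-\bm{x}_k\Vert_2\le\sum_{j=k}^{m-1}\mu^j\Vert\bm{x}_1-\bm{x}_0\Vert_2\le\frac{\mu^k}{1-\mu}\Vert\bm{x}_1-\bm{x}_0\Vert_2$, where the final bound sums the geometric tail using $0\le\mu<1$. Since $\mu^k\to 0$, the right-hand side can be made arbitrarily small uniformly in $m$, so $\{\bm{x}_k\}$ is Cauchy. Because $\mathbb{R}^n$ equipped with the Euclidean metric is complete, the sequence converges to some $\bm{x}^*\in\mathbb{R}^n$.

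Then I would verify that $\bm{x}^*$ is a fixed point and that it is unique. The contraction inequality with $\mu<1$ makes $\bm{f}$ Lipschitz, hence continuous, so $\bm{f}(\bm{x}^*)=\bm{f}(\lim_k\bm{x}_k)=\lim_k\bm{f}(\bm{x}_k)=\lim_k\bm{x}_{k+1}=\bm{x}^*$. For uniqueness, if $\bm{x}^*$ and $\bm{y}^*$ are both fixed points, then $\Vert\bm{x}^*-\bm{y}^*\Vert_2=\Vert\bm{f}(\bm{x}^*)-\bm{f}(\bm{y}^*)\Vert_2\le\mu\Vert\bm{x}^*-\bm{y}^*\Vert_2$, so $(1-\mu)\Vert\bm{x}^*-\bm{y}^*\Vert_2\le 0$, forcing $\bm{x}^*=\bm{y}^*$. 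Finally, since $\bm{x}_0$ was arbitrary and every such iteration converges to a fixed point while only one fixed point exists, the limit is always this same $\bm{x}^*$, which establishes the ``moreover'' clause.

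As for difficulty: this is a textbook result, so none of the steps is genuinely hard. The only place that requires care is the Cauchy estimate, where one must resist bounding $\Vert\bm{x}_m-\bm{x}_k\Vert_2$ by a single term and instead telescope through all intermediate iterates before summing the geometric series; and the essential structural ingredient, without which the argument collapses, is the completeness of $\mathbb{R}^n$ invoked to extract the limit $\bm{x}^*$ from the Cauchy sequence.
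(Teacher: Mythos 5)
Your Picard-iteration argument is correct in every step (geometric decay, telescoping Cauchy estimate, completeness of $\mathbb{R}^n$, continuity of the contraction for the fixed-point identity, and the $\mu<1$ uniqueness argument), and it is exactly the standard proof of this classical result. The paper does not prove the theorem itself but only cites it, so your proposal coincides with the canonical textbook argument the citation points to.
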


\section{The Physics-pRojected Neural-Newton Solver (\texttt{PRoNNS})}
Panel (\textbf{a}) of Fig. \ref{fig:Newton_CoNNS_PRoNNS} depicts the computational bottleneck associated with Newton iterations which are at the heart of any IRK method. In order to alleviate this bottleneck, we use a ReLU-based NN to learn the linear transformation which maps a residual error vector function $r(k^{(i)})$ to a Newton step via $\Delta k = -{J}({k}^{(i)})^{-1}{r}({k}^{(i)})$. ReLU-based NNs naturally provide piecewise linear input-output mappings, so they are particularly well-suited for learning the linear transformation codified by ${J}({k}^{(i)})^{-1}$. Thus, we replace this linear transformation with ReLU-based NN $\Phi_{p}$ which takes the residual error, the current state trajectory estimate $k^{(i)}$, and the current state value $x(t)$ as inputs:
\begin{align}\label{eq: PRoNNS}
k^{(i+1)}=k^{(i)}-\Phi_{p}\left(r(k^{(i)}),k^{(i)},x(t)\right).
\end{align}
Learning the linear mapping $J(k^{(i)})^{-1}r(k^{(i)})$ from simulated datasets is practically challenging, since the norm of the residual error vector $r$ decays to $0$ as $k$ converges to its fixed point. To overcome this challenge, we normalize $r(k^{(i)})$ by its own norm, and we additionally scale the output step by the same size. The updated Newton iterations are equivalent to 
\begin{align}\label{eq: PRoNNS_Scaled}
k^{(i+1)}=k^{(i)}-\Vert r(k^{(i)})\Vert J(k^{(i)})^{-1}\tfrac{r(k^{(i)})}{\Vert r(k^{(i)})\Vert}.
\end{align}
When training a NN to mimic (\ref{eq: PRoNNS_Scaled}), we treat $r(k^{(i)})/\Vert r(k^{(i)})\Vert$ as an input, and the output emulates $J(k^{(i)})^{-1}r(k^{(i)})/\Vert r(k^{(i)})\Vert$. We re-scale the NN output by $\Vert r(k^{(i)})\Vert$ when implementing \texttt{PRoNNS}:
\begin{align}\label{eq: PRoNNS_Scaled_NN}
k^{(i+1)}=k^{(i)}-\Vert r(k^{(i)})\Vert\Phi_{p}\left (r(k^{(i)})/\Vert r(k^{(i)})\Vert,k^{(i)},x(t)\right ).
\end{align}
Despite the input scaling in (\ref{eq: PRoNNS_Scaled_NN}), $\Phi_{p}$ is still trained to match the same linear transformation provided by the Jacobian transformation in (\ref{eq: NewtRK}). 
Finally, we note that even in the presence of linear transformation prediction error, routine (\ref{eq: PRoNNS_Scaled_NN}) still has the potential to find the fixed point $k^*$. In the left-hand panel of Fig. \ref{fig:Linearizations}, we show representative steps taken by a Newton solver. In the right-hand panel, we show how steps taken by \texttt{PRoNNS}, even in the presence of gross linearization error, can still converge to the fixed point of the residual function $r(k)$. Such convergence, however, is not guaranteed. In the following section, we introduce a NN-based solver whose convergence is guaranteed.

\begin{figure}
\centering
\includegraphics[width=1\linewidth]{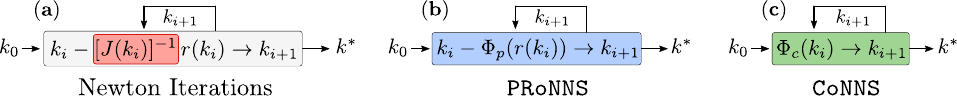}
\caption{Panel (\textbf{a}) depicts classical Newton iterations associated with (\ref{eq: NewtRK}). Panel (\textbf{b}) depicts \texttt{PRoNNS}, where the inverted Jacobian has been replaced by a piecewise linear NN mapping. Panel (\textbf{c}) depicts \texttt{CoNNS}, where a NN mapping directly iterates on state updates.}
\label{fig:Newton_CoNNS_PRoNNS}
\end{figure}

\begin{figure}
\centering
\includegraphics[width=0.75\linewidth]{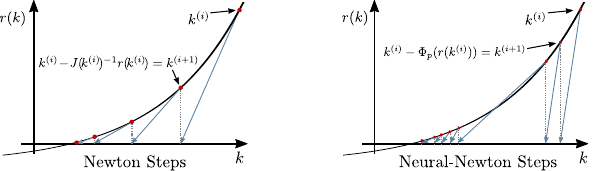}
\caption{The left panel depicts typical Newton steps, while the right panel demonstrates how imperfect Neural-Newton steps, as taken by \texttt{PRoNNS}, can still potentially converge to the desired root.}
\label{fig:Linearizations}
\end{figure}

\section{The Contracting Neural-Newton Solver (\texttt{CoNNS})}
In this section, we build a NN which effectively emulates the Newton solver routine in (\ref{eq: NewtRK}), but we apply training constraints to ensure contraction of the NN mapping. We begin by casting the Newton iterations from (\ref{eq: NewtRK}) as a contracting system which is searching for a fixed point. Generally, Newton may not contract for an arbitrarily large time step $\Delta t$ or arbitrarily chosen initial conditions. However, in this work, we assume these values are chosen such that Newton \textit{does} represent a contracting system. Contraction in some region can be guaranteed by applying conditions from the Newton-Kantorovich Theorem~\citep{Ortega:2000} or, more simply, by ensuring the gradient magnitude of the Newton map over some convex set is bounded to be less than 1. We may define analogous sufficient conditions in $n$-dimensions by taking the multivariable derivative of map (\ref{eq: NewtRK}) with respect to $k_2$ and ensuring the Jacobian singular values are bounded by unity:
\begin{align}\label{eq: Newt_cont}
\varsigma_{m}\left({J}^{-1}\left[\begin{array}{cccc}
(\frac{\partial}{\partial {k}_{2_1}}{J}){J}^{-1}{r} & (\frac{\partial}{\partial {k}_{2_2}}{J}){J}^{-1}{r} & \cdots & (\frac{\partial}{\partial {k}_{2_n}}{J}){J}^{-1}{r}\end{array}\right]\right)<1,\;\forall {k}\in {\mathcal K}.
\end{align}


\subsection{Conditions for guaranteed contraction of a neural network}
We define an $h$-layer NN ${k}^{(i+1)}={\Phi}_c({k}^{(i)},{x}(t))$, equipped with ReLU activation $\sigma(\cdot)$, via
\begin{align}\label{eq: Phi_DNN}
{k}^{(i+1)}=\sigma\left({W}_{h}\ldots\sigma\left({W}_{2}\sigma\left({W}_{1}{k}^{(i)}+{U}{x}(t)+{b}_{1}\right)+{b}_{2}\right)\ldots+{b}_{h}\right) \triangleq{\Phi}_c({k}^{(i)},{x}(t)).
\end{align}
We now present conditions which are sufficient to ensure the self-mapping routine (\ref{eq: Phi_DNN}) will contract.

\begin{remark}\label{lemma: ReLU}
For all vectors ${x},{y}\in{\mathbb R}^n$, the ReLU operator $\sigma:\mathbb{R}^{n}\rightarrow\mathbb{R}^{n}$ satisfies 
\begin{equation}\label{eq: Relu_prop}
\left\Vert \sigma({x})-\sigma({y})\right\Vert_p \le\left\Vert {x}-{y}\right\Vert_p,\;\forall p\in\{1,2,...,\infty\}.
\end{equation}
\end{remark}

\begin{lemma}\label{lem:single_layer}
Consider the $i^{\rm th}$ layer of (\ref{eq: Phi_DNN}). If $\varsigma_m({W}_{i})<1$, then this layer is a contraction mapping.
\begin{proof}
We directly apply the definition of contraction (\ref{eq: cont_cond}) for arbitrary inputs $x$ and $y$. The result yields $\left\Vert \sigma\left({W}_{i}{x}+{b}_{i}\right)-\sigma\left({W}_{i}{y}+{b}_{i}\right)\right\Vert _{2}  \le \left\Vert {W}_{i}\left({x}-{y}\right)+{b}_{i}-{b}_{i}\right\Vert_{2} \le\varsigma_m({W}_i)\left\Vert {x}-{y}\right\Vert_{2}.$
\end{proof}
\end{lemma}
\begin{theorem}\label{thm: NN_FPT}
A sufficient condition to ensure (\ref{eq: Phi_DNN}) satisfies the Banach Fixed-Point Theorem is
\begin{align}\label{eq: sup_W1}
\sup_{i\in\{1, \ldots, h\}}\,\varsigma_{m}({W}_{i})<1.
\end{align}
\begin{proof}
Write (\ref{eq: Phi_DNN}) as a sequence of mappings, such that ${g}_{1}({k}) =\sigma\left({W}_{1}{k}+{U}{x}(t)+{b}_{1}\right)$, ${g}_{2}({k}) =\sigma\left({W}_{2}{g}_{1}({k})+{b}_{2}\right)$, ..., ${g}_{h}({k}) =\sigma\left({W}_{h}{g}_{h-1}({k})+{b}_{h}\right)$. By Lemma \ref{lem:single_layer}, each of these functions represents a contraction if (\ref{eq: sup_W1}) is satisfied. The composition of functions ${\Phi} \triangleq{g}_{n}\circ\cdots\circ{g}_{2}\circ{g}_{1}$ which are individually contracting results in a new function which is also contracting. Thus, ${\Phi}({k},{x}(t))$ is contracting and necessarily satisfies the Banach FPT.
\end{proof}
\end{theorem}
Directly constraining the singular values of a matrix (i.e., via (\ref{eq: sup_W1})) is challenging. Instead, we seek to use semidefinite programming (SDP) tools which can efficiently constrain matrix eigenvalues. Furthermore, such tools are conveniently available through CVXPY Layers~\citep{Agrawal:2019}.
Note that $\sigma({W})<1$ implies $\lambda\left({W}{W}^{T}\right)<1$, and furthermore, ${I}-{W}{I}^{-1}{W}^{T}\succ{0}$. By the Schur complement lemma~\citep{VanAntwerp:2000,Revay:2020}
\begin{equation}\label{eq: shcur_inexact}
\left[\begin{array}{cc}
{I}(1-\epsilon) & {W}\\
{W}^{T} & {I}(1-\epsilon)
\end{array}\right]\succeq{0}\;\Rightarrow\;\varsigma_{m}({W})<1.
\end{equation}
\junk{\begin{equation}\label{eq: shcur_exact}
\left[\begin{array}{cc}
{I} & {W}\\
{W}^{T} & {I}
\end{array}\right]\succ{0}\;\Leftrightarrow\;\varsigma_{m}({W})<1
\end{equation}
with no conservativeness. To make (\ref{eq: shcur_exact}) amenable to SDP tools, we again invoke parameter $\epsilon$:
\begin{equation}\label{eq: shcur_inexact}
\left[\begin{array}{cc}
{I}(1-\epsilon) & {W}\\
{W}^{T} & {I}(1-\epsilon)
\end{array}\right]\succeq{0}\;\Rightarrow\;\varsigma_{m}({W})<1,
\end{equation}
which introduces a very small degree of conservativeness. While (\ref{eq: shcur_inexact}) is a generally less restrictive (because there is no symmetry constraint), (\ref{eq: sym_cond}) is computationally cheaper to implement.}
The contraction condition (\ref{eq: sup_W1}) can be satisfied for all square matrices by imposing (\ref{eq: shcur_inexact}) via conventional SDP tools; non-square matrices $W_1$ and $W_h$ are dealt with by defining and dealing with augmented matrices, e.g., $\tilde{{W}}_{1}=\left[{W}_{1}\,|\,{M}_1\right]\in{\mathbb R}^{n\times n}$. At each training step, CVXPY Layers is used to optimally projects the unconstrained matrix ${W}_{i}$ into a constrained space via
\begin{equation}\label{eq: min_Wi}
\begin{aligned}
\min_{\hat{{W}}_{i}}\;\; & \big\Vert {W}_{i}-\hat{{W}}_{i}\big\Vert _{F}^{2},\;\forall i\in\{1,\ldots h\},\;\;\; 
{\rm s.t.}\; (\ref{eq: shcur_inexact}).
\end{aligned}
\end{equation}
Constrained matrices $\hat{{W}}_{i}$ then replace their unconstrained counterparts ${{W}}_{i}$ in (\ref{eq: Phi_DNN}). The projected training routine in (\ref{eq: min_Wi}) guarantees that (\ref{eq: Phi_DNN}) will converge to a unique fixed point $k^*$. Panel (\textbf{c}) of Fig. \ref{fig:Newton_CoNNS_PRoNNS} portrays the functionality of \texttt{CoNNS}, which iterates until convergence.

\section{Numerical test results}\label{sec_results}
In this section, we provide test results associated with three simulated systems: a two-state cubic oscillator~\citep{Yuying:2020}, a three-state Hopf bifurcation~\citep{Yuying:2020}, and a 10-state electrical power system, known as the Kundur system~\citep{Kundur:1994}.
\begin{equation*}
\begin{array}{l}
\textbf{Cubic Oscillator:}\\
\dot{x} =-0.1x^{3}+2y^{3}\\
\dot{y} =-2x^{3}-0.1y^{3}\\
\hspace{1em}
\end{array}\;\;\bigg\vert\;\;
\begin{array}{l}
\textbf{Hopf bifurcation:}\\
\dot{\mu} =0\\
\dot{x}   =\mu x+y-x(x^{2}+y^{2})\\
\dot{y}   =\mu y-x-y(x^{2}+y^{2})
\end{array}\;\;\bigg\vert\;\;
\begin{array}{l}
\textbf{Kundur system:}\\
\dot{\delta}_{i} =\omega_{i}\\
\dot{\omega}_{i} ={\hat p}_{i}\!-\!{\hat d}_i\omega_{i}\!-\!\sum\!{\hat B}_{ij}\sin(\delta_{i}\!-\!\delta_{j})\\
\forall i \in {\mathcal I}.
\end{array}
\end{equation*}
\junk{\begin{equation*}
\left.\begin{array}{l}
\textbf{Cubic Oscillator:}\\
\dot{x} =-0.1x^{3}+2y^{3}\\
\dot{y} =-2x^{3}-0.1y^{3}
\end{array}\right\} \;\text{Cubic Oscillator}\quad\quad\left.\begin{array}{l}
\dot{\mu} =0\\
\dot{x}   =\mu x+y-x(x^{2}+y^{2})\\
\dot{y}   =\mu y-x-y(x^{2}+y^{2})
\end{array}\right\} \;\text{Hopf bifurcation}
\end{equation*}}
To collect training data, we defined a set of initial conditions associated with each system, and we randomly perturbed these conditions in order to generate strong system perturbations. We then simulated the resulting deterministic trajectories via the standard Newton-based trapezoidal integration method of (\ref{eq: Trap})-(\ref{eq: NewtRK}). We used a step size ${\Delta t}$ between 0.025s and 0.001s, and Newton convergence tolerance was set to $\epsilon=10^{-9}$. Additional settings can be found directly in the code~\citep{Code:2021}. After collecting 50 trajectories for each system, we appropriately trained both \texttt{PRoNNS} and \texttt{CoNNS} on the Newton step data. Training was performed with PyTorch~\citep{PyTorch:2019}, and we used CVXPY Layers~\citep{Agrawal:2019} to solve (\ref{eq: min_Wi}). We used Adam~\citep{Adam:2014} with learning rates set between $10^{-3}$ and $10^{-5}$ (see code). All simulation and training was performed on a Dell XPS laptop equipped with an Intel i7 CPU @ 2.60GHz 16 GB of RAM.
\subsection{Results: \texttt{PRoNNS}}

To model \texttt{PRoNNS}, we trained NNs containing three hidden layers; the NNs were respectively given 10 (cubic oscillator), 10 (Hopf), and 20 (Kundur) nodes per layer. We evaluated the performance of \texttt{PRoNNS} by testing it on 50 trajectories stemming from the same initial condition distributions as the training set. We then computed the 2-norm error across all variables and all time for every trajectory; the error associated with the $j^{\rm th}$ trajectory, therefore, is $e_{j}=\Vert\hat{x}_{i}(t)-x_{i}(t)\Vert,\forall t,\forall i$, where $\hat{x}_{i}(t)$ is the $i^{\rm th}$ state predicted by \texttt{PRoNNS}. Test results are depicted in Fig.~\ref{fig:trajectories_pronns}. Notably, \texttt{PRoNNS} was able to accurately solve for trajectories across a wide region of initial conditions and in temporal regions well beyond its training (e.g., it was trained on 45 seconds of simulation data for the cubic/Hopf systems, but it could generalize well beyond these times). Error statistics are reported in Table \ref{tbl:error_pronns}; from these data, \texttt{PRoNNS} performed almost equally well on both unseen test trajectories and on training data. In Fig.~\ref{fig:error_pronns}, we depict the error accumulated by \texttt{PRoNNS} in various regions of state space for the cubic oscillator.

\begin{figure}[ht]
    \centering
  \subfigure[\small Cubic Oscillator]{
      \includegraphics[width=0.32\linewidth]{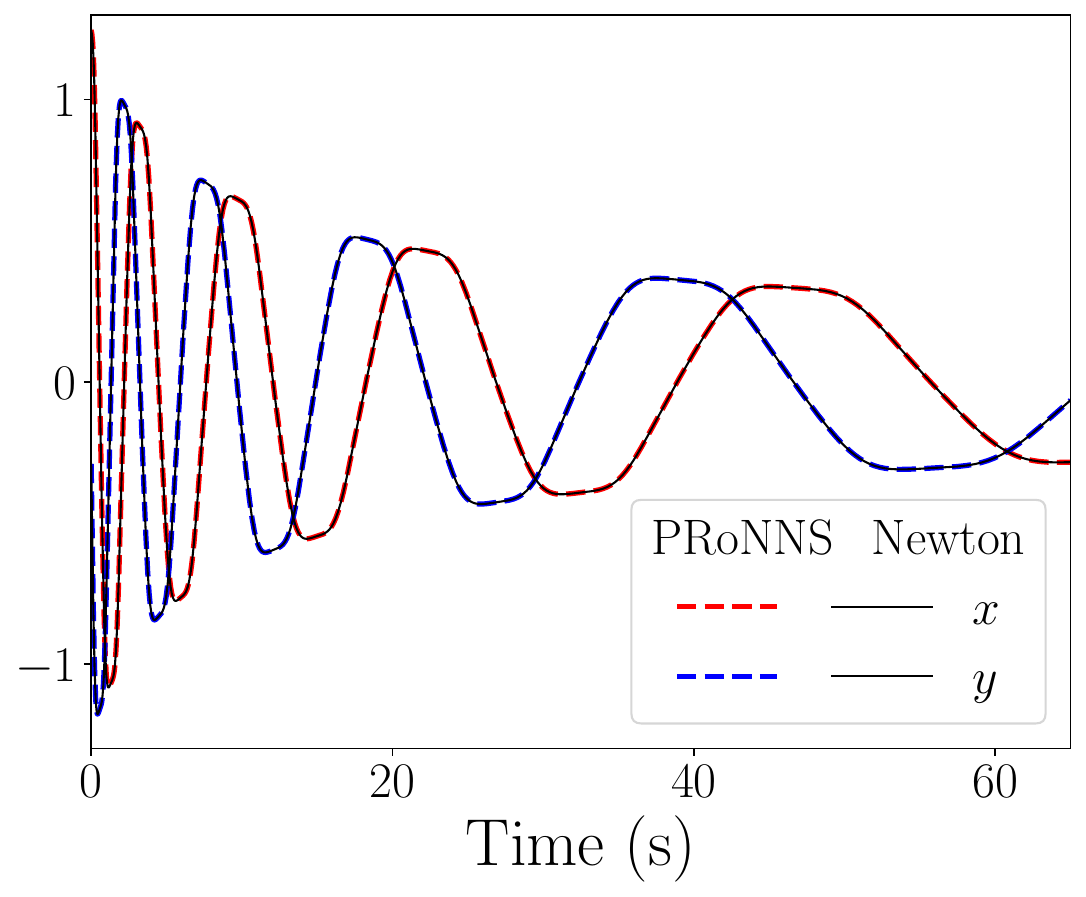}}
    \hfill
  \subfigure[\small Hopf Bifurcation]{
        \includegraphics[width=0.32\linewidth]{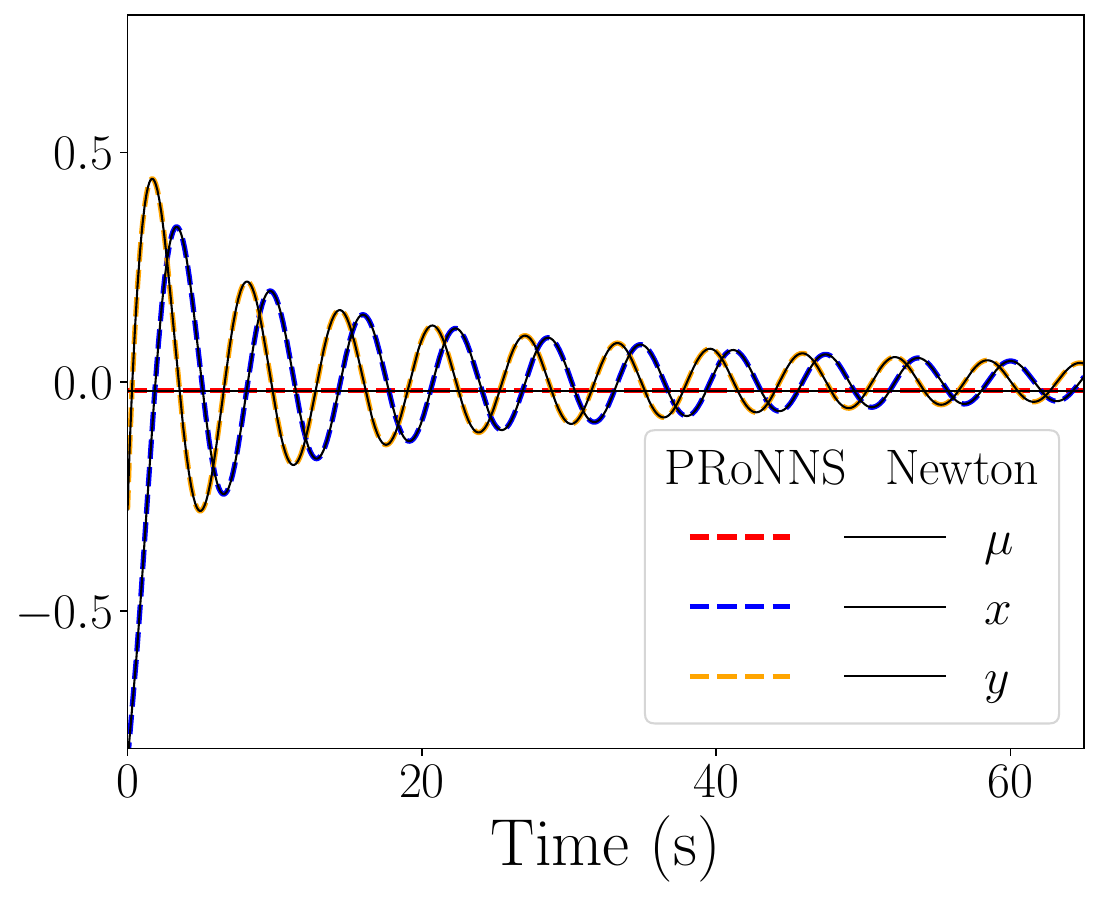}}
    \hfill
  \subfigure[\small Kundur]{
        \includegraphics[width=0.32\linewidth]{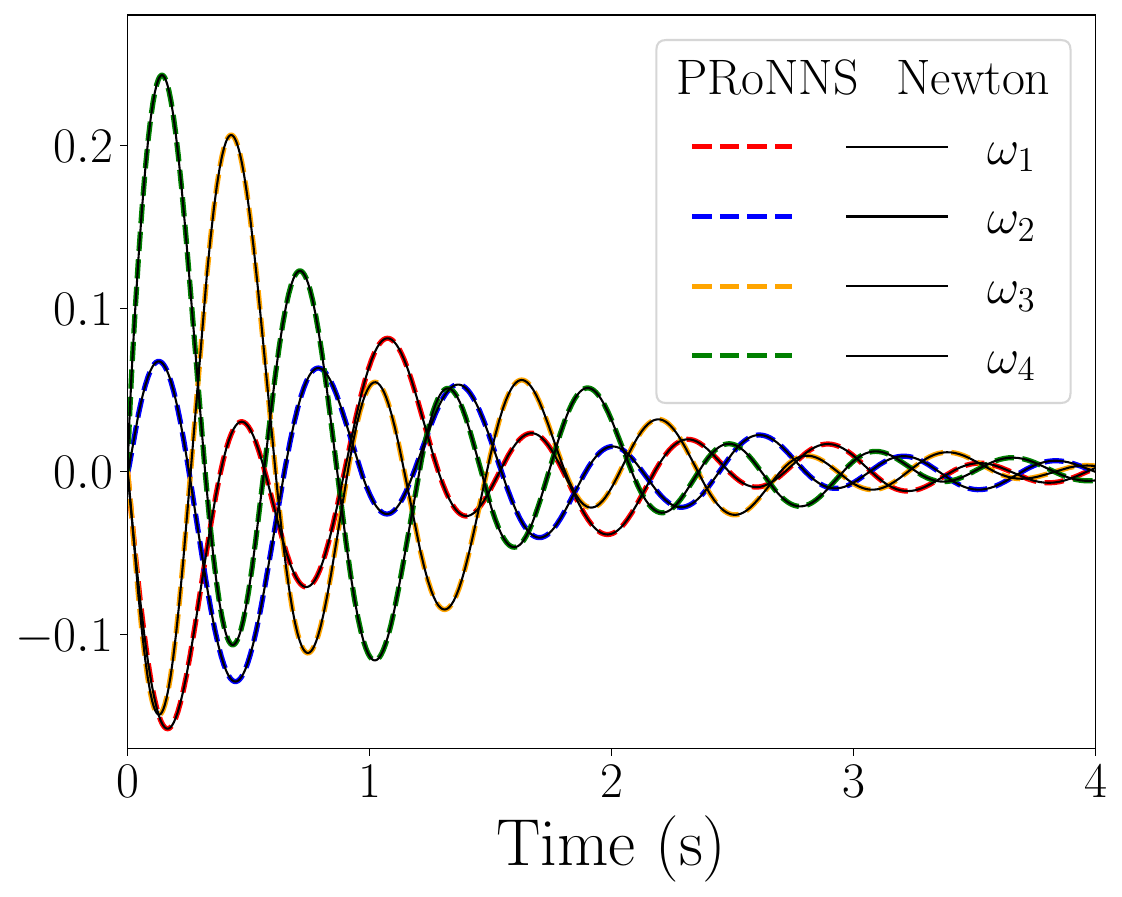}}
  \caption{Simulated trajectories pulled from test data, generated by Newton (benchmark) and \texttt{PRoNNs}.}
  \label{fig:trajectories_pronns} 
\end{figure}

\paragraph{Timing Analysis:} 
In order to obtain a first impression of the computational properties, we conducted a timed comparison of the calculation of a single time step $\Delta t$ using Newton iterations and \texttt{PRoNNs} for a range of states $x_0$, using thousands of repetitions and selecting the lowest run-time to reduce noise. For the cubic oscillator, \texttt{PRoNNS} required about 65\% \textit{more} time than the Newton iterations. For the Hopf Bifurcation system, the two methods matched each other very closely. Lastly, for the Kundur system, the largest among the test cases, \texttt{PRoNNs} required 31\% \textit{less} time than Newton; i.e., \SI{0.075}{\milli\second} for Newton's \SI{0.108}{\milli\second}. Absolute times are strongly influenced by $\Delta t$ and $x_0$ as well as the NN size, but the trend in the relative comparison matches the following consideration. Newton's method is dominated by system solve ${J}^{-1}{r}$ and scales with ${\mathcal O}(n^3)$. Meanwhile, evaluating $h$-layer NNs with $m$ nodes per layer scales with $\sim {\mathcal O}(m^2)$, since $m\gg h$. Thus, computational benefit of \texttt{PRoNNs} is realized more saliently in large systems whose size satisfies $n>m^{2/3}$. 


\junk{\begin{table}
  \caption{Timing Analysis of Newton vs \texttt{PRoNNS}. Time in milliseconds per evaluation step.}
  \small
  \label{tbl:timing}
  \centering
  \begin{tabular}{lccc}
    \toprule
    & \multicolumn{1}{c}{Cubic oscillator} & \multicolumn{1}{c}{Hopf bifurcation} &\multicolumn{1}{c}{Kundur}\\
    \midrule
    Newton & 0.112 & 0.112 & 0.109\\
    \texttt{PRoNNS} & 0.185 & 0.112 & 0.075\\
    \bottomrule
  \end{tabular}
\end{table}}

\begin{table}
  \caption{Training and test error, computed as the 2-norm of the trajectory approximation error.}
  \footnotesize
  \label{tbl:error_pronns}
  \centering
  \begin{tabular}{llcccccc}
    \toprule
    & & \multicolumn{2}{c}{Cubic oscillator ($\times 10^{-3}$)} & \multicolumn{3}{c}{Hopf bifurcation ($\times 10^{-4}$)} &\multicolumn{1}{c}{Kundur ($\times 10^{-4}$)}\\
    \cmidrule(lr){3-4} \cmidrule(lr){5-7} \cmidrule(lr){8-8}
    Metric  & Data   &   $\left\Vert x - \hat{x} \right\Vert _{2}$ & $\left\Vert y - \hat{y} \right\Vert _{2}$ & $\left\Vert \mu - \hat{\mu} \right\Vert _{2}$ & $\left\Vert x - \hat{x} \right\Vert _{2}$ & $\left\Vert y - \hat{y} \right\Vert _{2}$ & $\left\Vert x - \hat{x} \right\Vert _{2}$ \\
    \midrule
    mean & Training & 1.008 & 1.004 & 4.418 $\times 10^{-5}$  & 3.422 & 3.406 & 1.397\\
    \midrule
    mean & Test & 0.978 & 0.969 & 4.456 $\times 10^{-5}$ & 3.766 & 3.755 & 1.774\\
    sd & Test & 0.652 & 0.652 & 2.846 $\times 10^{-5}$ & 3.188 & 3.155 & 1.055\\
    max & Test & 2.654 & 2.597 & 10.72 $\times 10^{-5}$ & 9.083 & 9.199 & 4.376\\
    \bottomrule
  \end{tabular}
\end{table}

\begin{figure}
    \centering
    \includegraphics[width=0.57\linewidth]{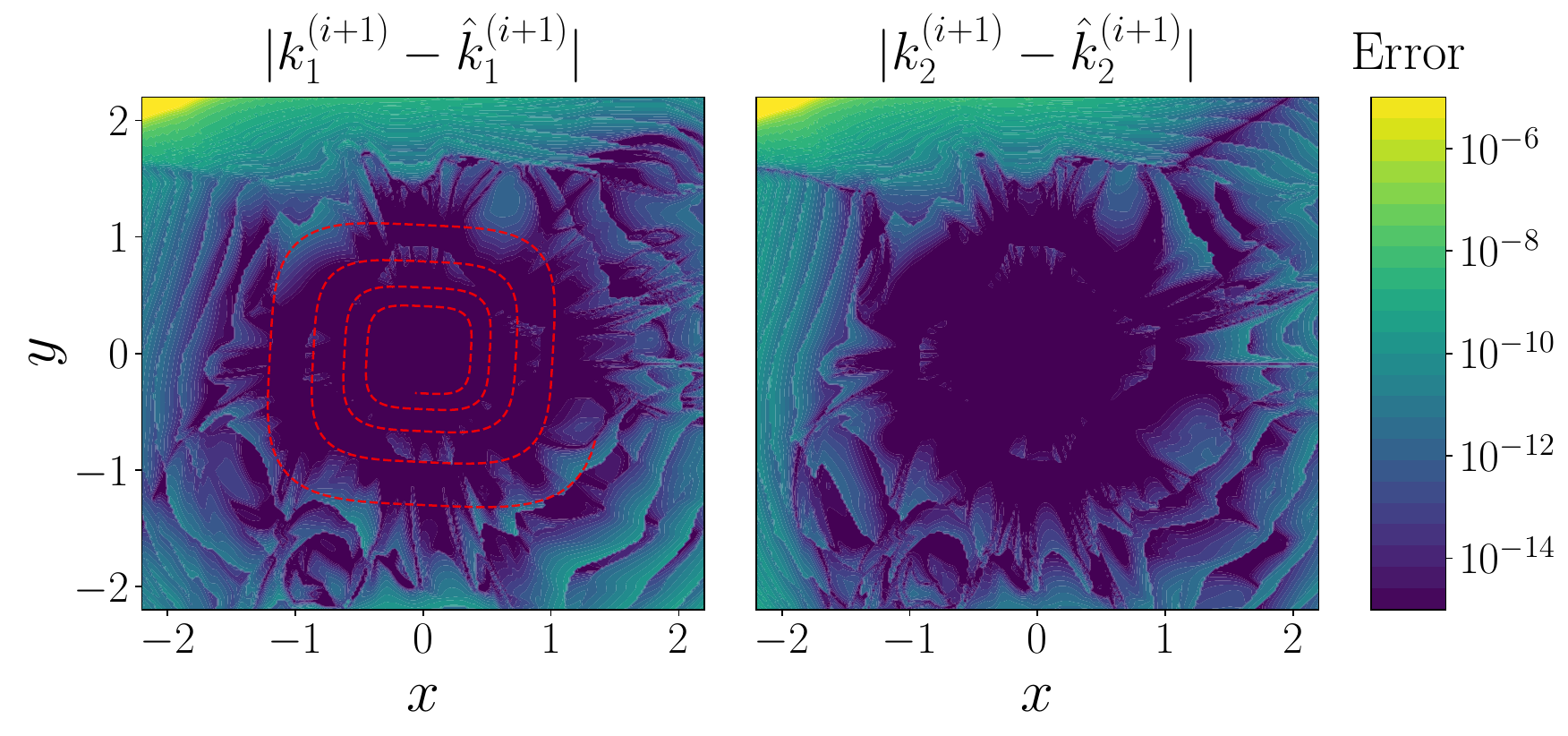}
    \caption{Error accumulated by \texttt{PRoNNS} trajectory steps in state space after training on the cubic oscillator. The red swirl shows the evolution of the \textit{training} trajectory which evolved at the farthest region of state space.}
    \label{fig:error_pronns}
\end{figure}

\subsection{Results: \texttt{CoNNS}}
We trained NNs containing four hidden layers with 40 (cubic), 50 (Hopf), and 100 (Kundur) nodes per layer. Notably, these NNs had to be several factors larger than the \texttt{PRoNNS} NNs in order to achieve acceptable error characteristics. We evaluated the performance of \texttt{CoNNS} by testing it on 50 trajectories stemming from the same distribution as the training set, and we bench-marked against an equivalently sized NN which was trained with no contraction constraints; both networks were trained to the same level of loss. Panels (a), (b), and (c) of Fig. (\ref{fig:trajectories}) compare the performance of \texttt{CoNNS} to both the unconstrained NN and Newton for all three dynamical systems. \texttt{CoNNS} clearly outperforms the vanilla NN in the case of the cubic and Kundur systems. In panel (d), where a larger perturbation is applied to the system, the unconstrained NN shows sharp, incoherent trajectories which are very poor predictors of the true underlying dynamics.

Table~\ref{tbl:error_conns} reports the cubic oscillator training and test results for both the constrained and unconstrained NNs.; similar statistics were computed for the Hopf and Kundur systems as well. Across all statistics, the results definitively suggest that \texttt{CoNNS} provides a more reliable prediction compared to the unconstrained NN predictions; furthermore, \texttt{CoNNS} required notably fewer iterations to converge. However, we required a relatively large NN to achieve these results (approximately 5x larger than \texttt{PRoNNS} in all cases), and the time domain error was relatively larger than the error accumulated by \texttt{PRoNNS}; compare, for instance, Fig. panel (c) of Fig~\ref{fig:trajectories_pronns} and panel (c) of Fig.~\ref{fig:trajectories}. 

\begin{figure}[ht]
    \centering
  \subfigure[\small Cubic Oscillator]{
       \includegraphics[width=0.32\linewidth]{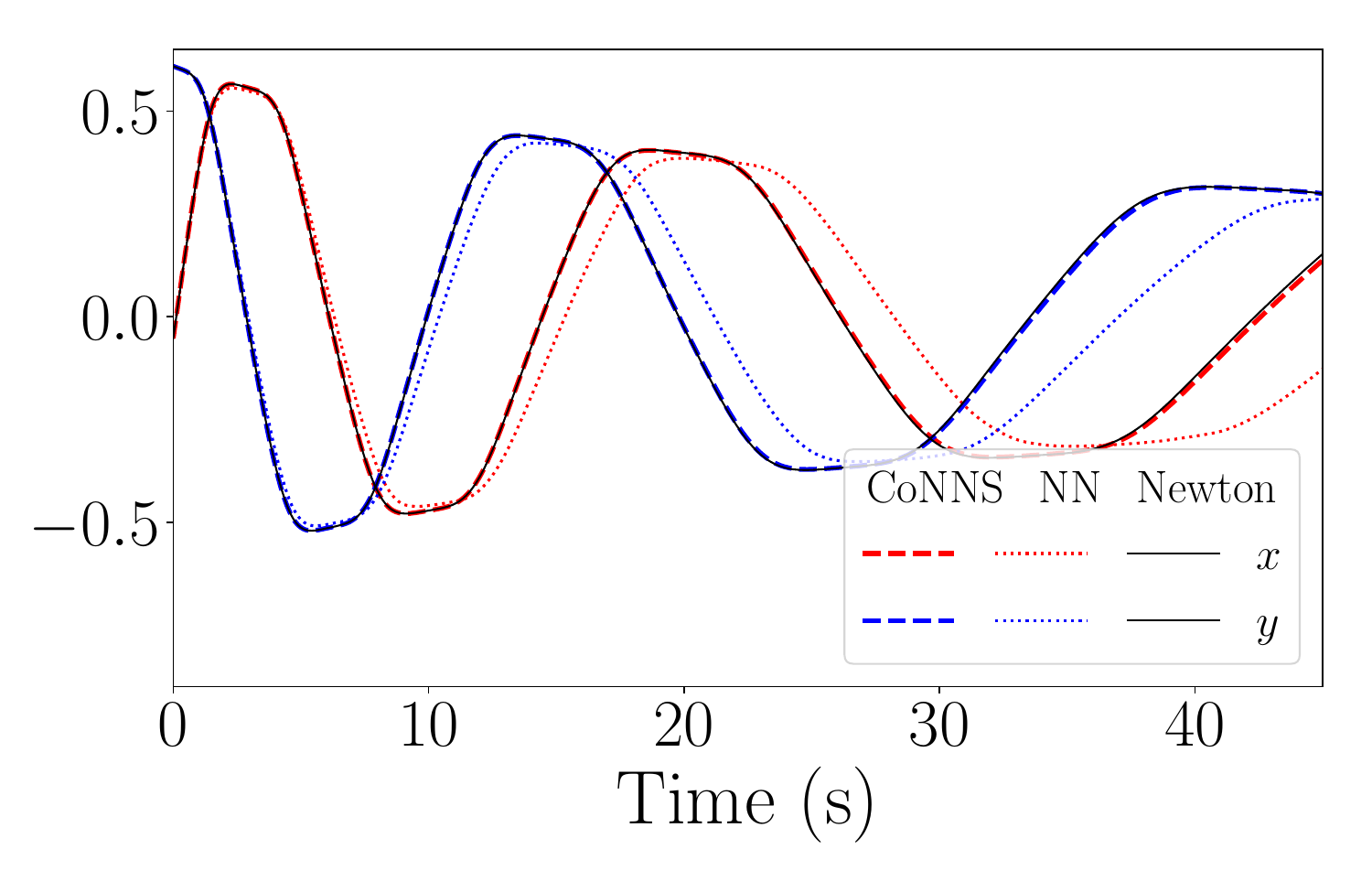}}
    \hfill
  \subfigure[\small Hopf Bifurcation]{
        \includegraphics[width=0.32\linewidth]{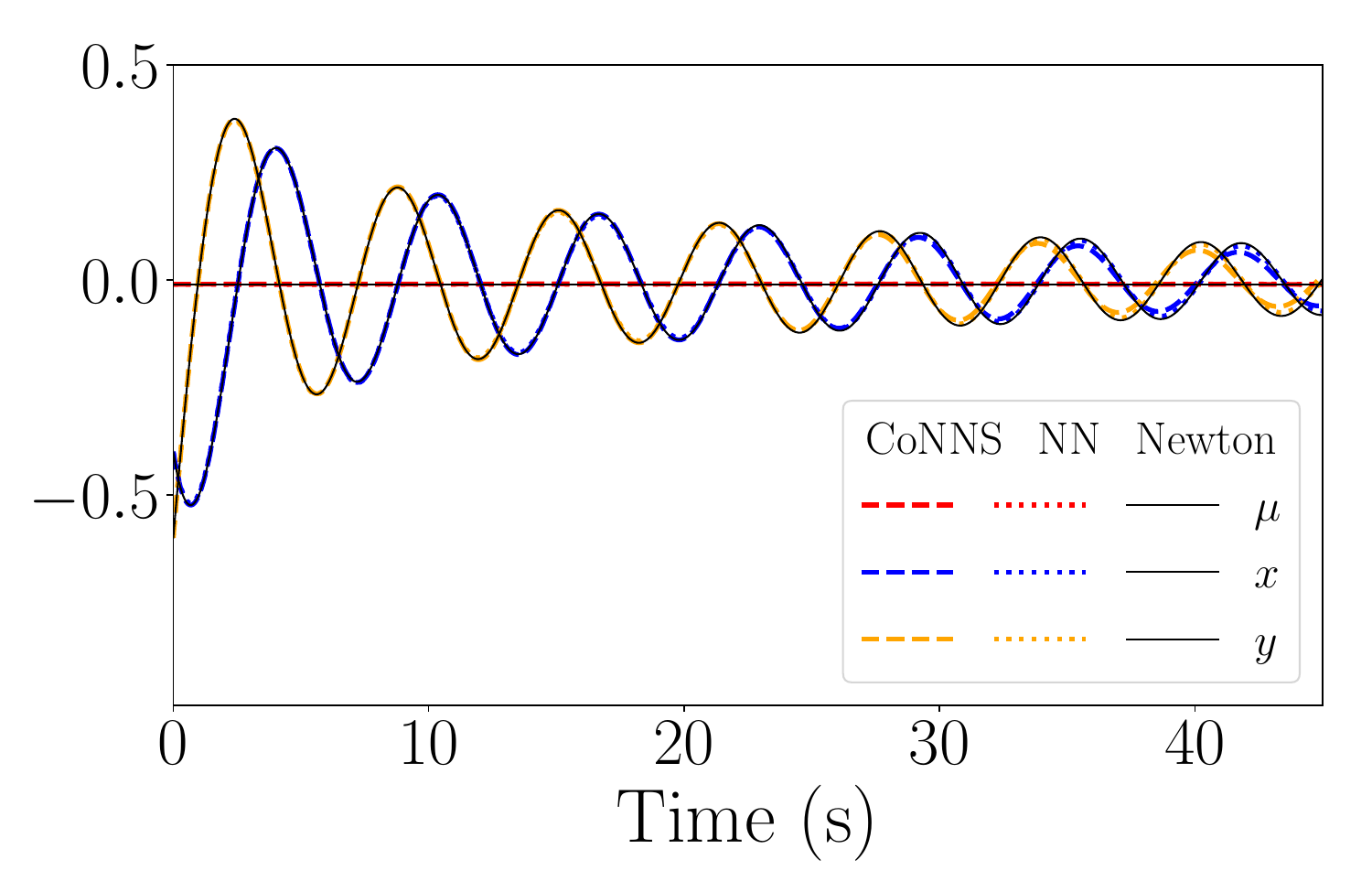}}
        \hfill
  \subfigure[\small Kundur System]{%
       \includegraphics[width=0.32\linewidth]{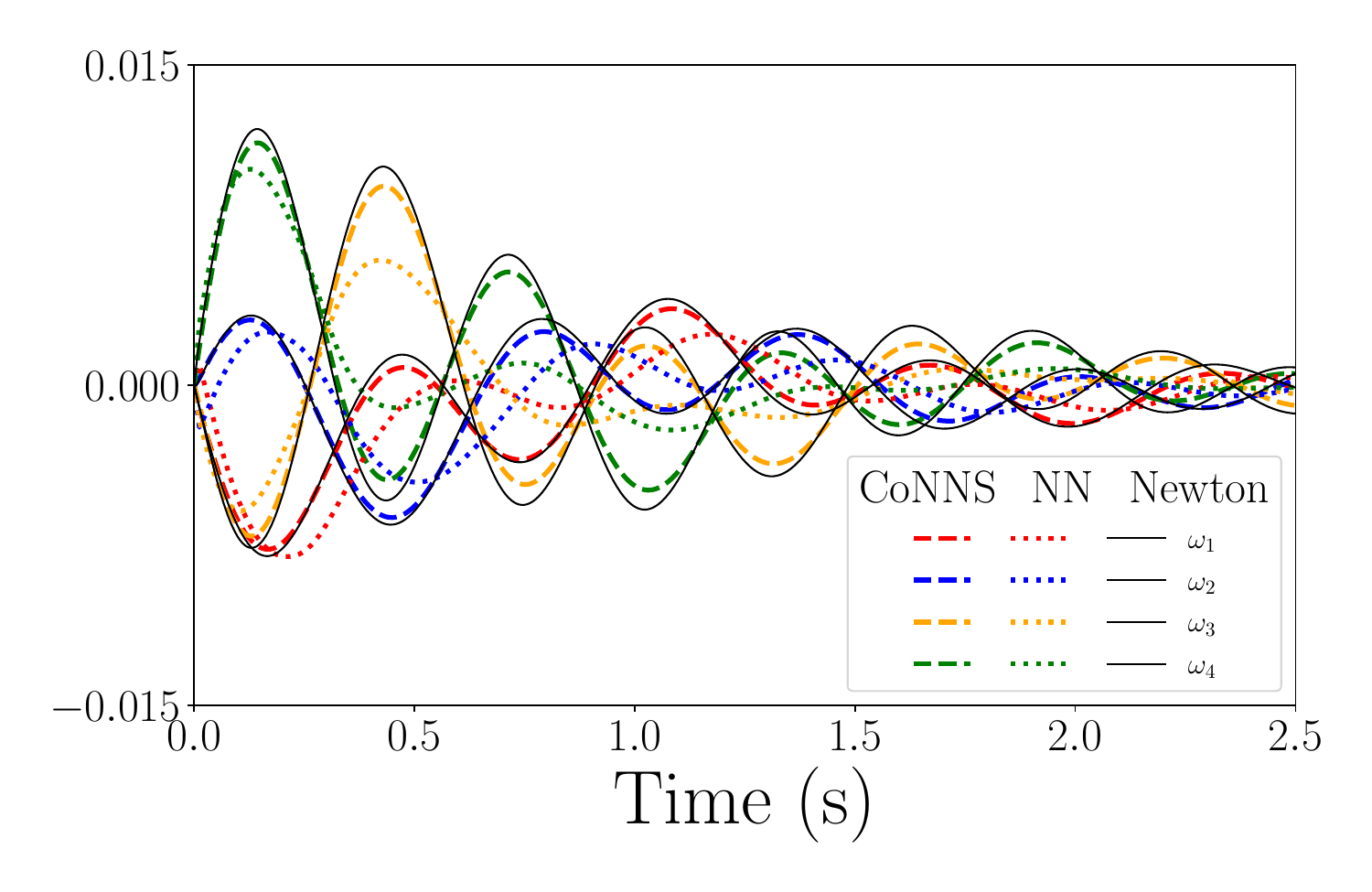}}\\
  \subfigure[\small Kundur: Larger Perturbation]{%
        \includegraphics[width=0.32\linewidth]{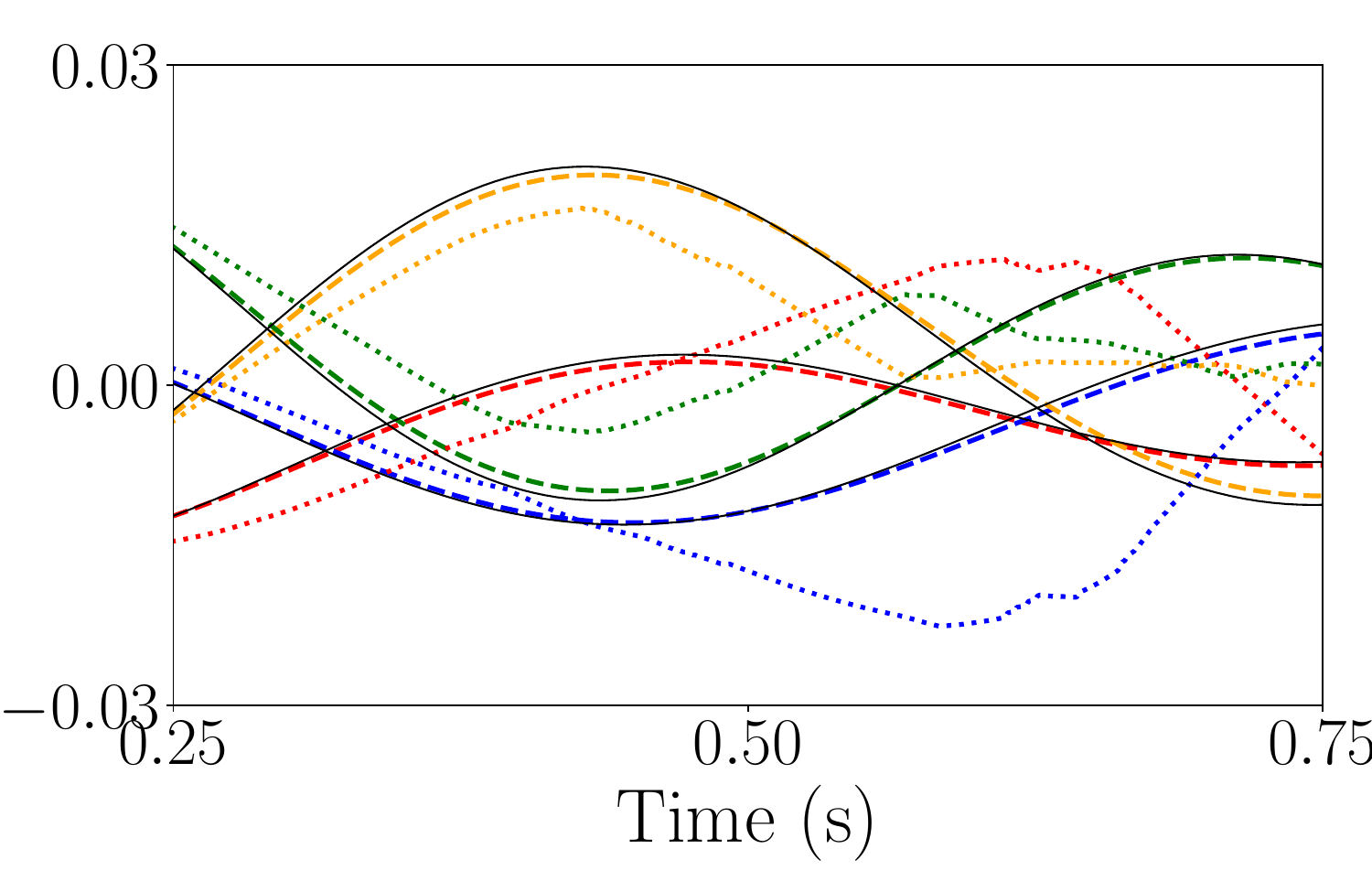}}
  \subfigure[\small \texttt{CoNNS}]{%
       \includegraphics[width=0.32\linewidth]{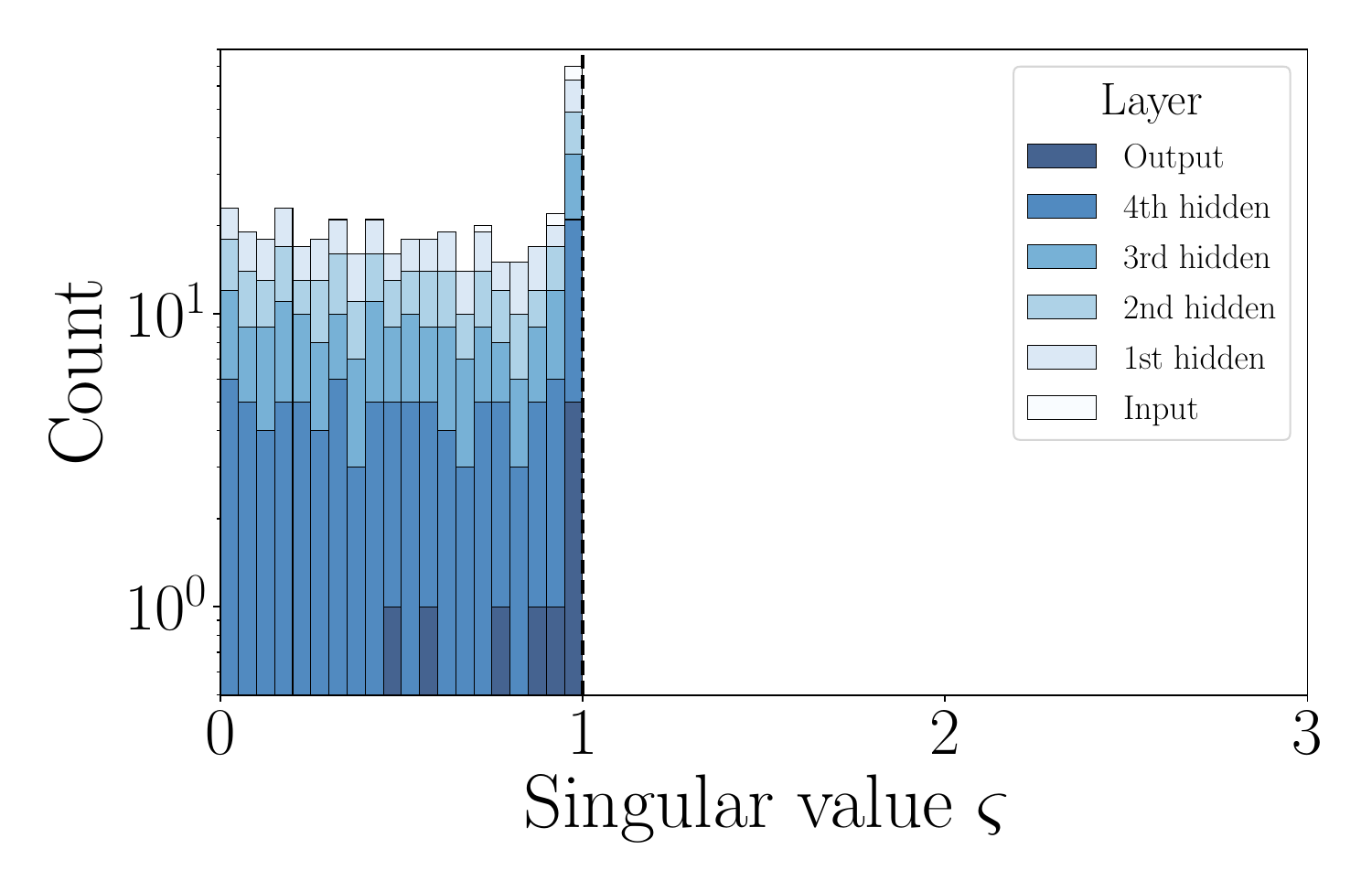}}
    \hfill
  \subfigure[\small Unconstrained NN]{%
        \includegraphics[width=0.32\linewidth]{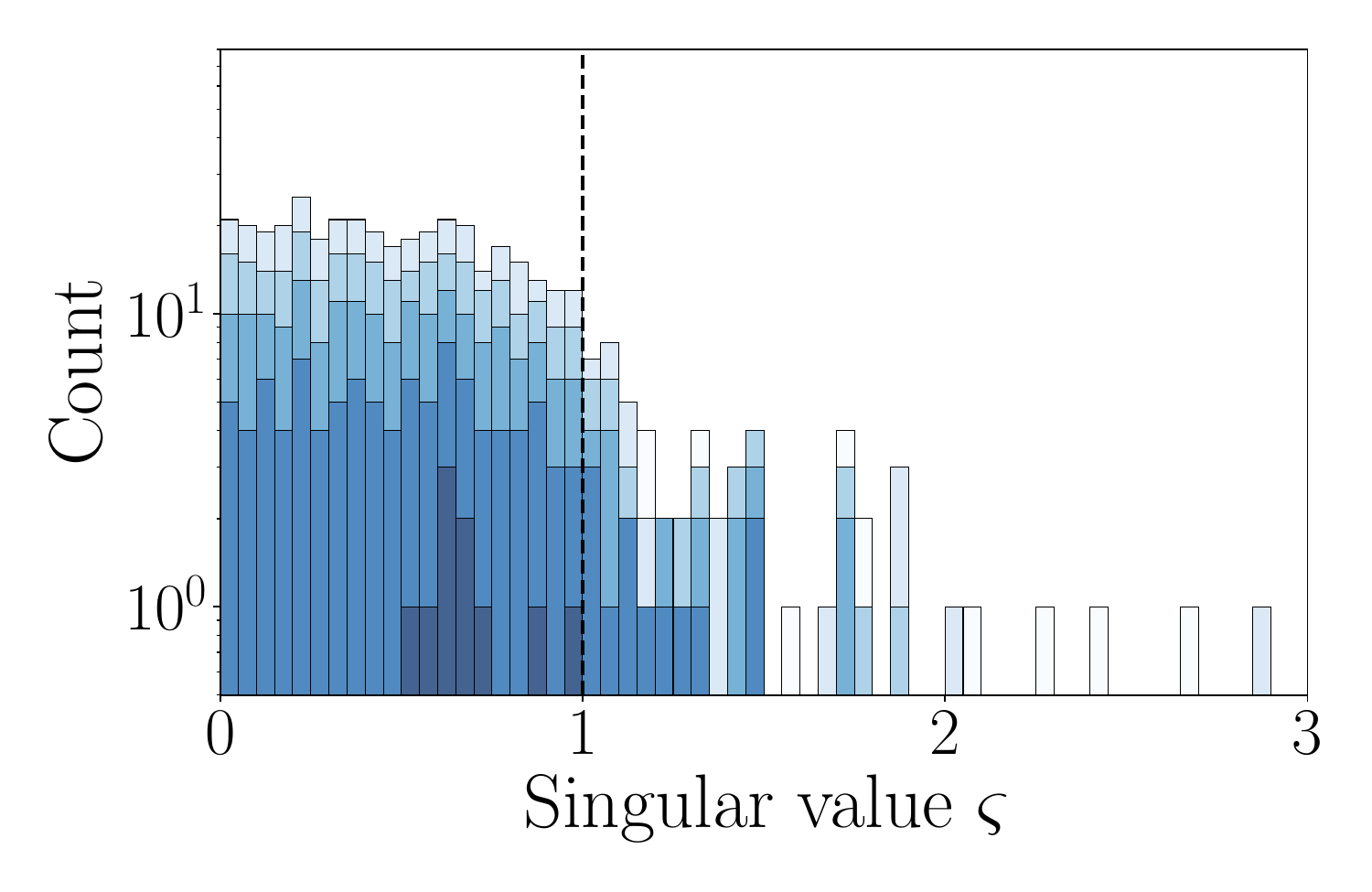}}
  \caption{Panels (a)-(d) show system trajectories simulated with Newton, \texttt{CoNNS} and an unconstrained NN. Singular values of the Kundur NN weighting matrices are shown in panel (e)-(f).}
  \label{fig:trajectories} 
\end{figure}

\begin{table}[ht]
  \caption{Training and test trajectory error. Iterations is the cumulative count of passes through \texttt{CoNNS} or the unconstrained NN for a given trajectory (Newton required a maximum of 8950 iterations, for reference).}
  \footnotesize
  \label{tbl:error_conns}
  \centering
  \begin{tabular}{llrrrrrrrr}
    \toprule
    & & \multicolumn{3}{c}{Cubic oscillator (\textit{constrained})} & \multicolumn{3}{c}{Cubic oscillator (\textit{unconstrained})} & \\
    \cmidrule(lr){3-5} \cmidrule(lr){6-8}
    Metric  & Data   &   $\left\Vert x - \hat{x} \right\Vert _{2}$ & $\left\Vert y - \hat{y} \right\Vert _{2}$ & iterations & $\left\Vert x - \hat{x} \right\Vert _{2}$ & $\left\Vert y - \hat{y} \right\Vert _{2}$ & iterations \\
    \midrule
    mean & Training & 0.3685 & 0.3620 & 12873 & 7.8412 & 7.9659 & 216531 \\
    \midrule
    mean & Test & 0.7297 & 0.7279 & 13177 & 8.7604 & 8.8905 & 213619  \\
    sd & Test & 0.8200 & 0.8230 & 1728 & 5.4595 & 5.2871 &  21309 \\
    max & Test & 4.0938 & 4.2462 & 16413 & 21.9004 & 22.5338 &  237974\\
    \bottomrule
  \end{tabular}
\end{table}


\section{Conclusion}

With the goal of accelerating time domain simulation speeds, this paper developed two learning-based methods for emulating numerical solvers. The Physics-pRojected Neural-Newton Solver (\texttt{PRoNNS}) modeled inverted Jacobian transformations, and it was found to very successfully emulate trapezoidal integration, accumulating very low degrees of trajectory error ($\le 10^{-3}$) across hundreds of tests, and achieving up to a 31\% speed-up over a Newton-based benchmark. This result was achieved on a relatively small, 10-state system, and computational advantages will certainly scale with system size. While no convergence guarantees were derived, \texttt{PRoNNS} was found to reliably converge to meaningful solutions. We also developed a fully data-driven solver, termed the Contracting Neural-Newton Solver (\texttt{CoNNS}), which incorporated training constraints to guarantee iterative convergence. \texttt{CoNNS} was found to consistently outperform a vanilla-NN benchmark, offering an order of magnitude more accurate test performance on both the cubic oscillator and Kundur systems. This level of accuracy, however, did not meet the level of \texttt{PRoNNS}, and it came at the cost of incorporating $5\times$ more hidden neurons, resulting in slower-than-Newton performance.

\bibliography{L4DC_bib}

\end{document}